\DeclareMathOperator*{\argmin}{arg\,min}
\newtheorem{proposition}{Proposition}
\newtheorem{definition}{Definition}
\newcommand{\myrowcolour}{\rowcolor[gray]{0.925}}
\newcommand{\highest}[1]{\textcolor{blue}{\mathbf{#1}}}
\definecolor{rulecolor}{RGB}{70,10,171}
\definecolor{tableheadcolor}{RGB}{120,50,200}
\newcommand{\topline}{ %
        \arrayrulecolor{rulecolor}\specialrule{0.1em}{\abovetopsep}{0pt}}%
\newcommand{\midtopline}{ %
        \arrayrulecolor{rulecolor}\specialrule{\lightrulewidth}{0pt}{0pt}}%
\newcommand{\bottomline}{ %
        \arrayrulecolor{rulecolor} \specialrule{\lightrulewidth}{0pt}{0pt}}%
\definecolor{mycol1}{rgb}{0.5961,    0.7255,    0.8157}
\definecolor{mycol2}{RGB}{219,206,186}
\definecolor{mycol3}{rgb}{0.7176    0.8275    0.4667}
\definecolor{mycol4}{RGB}{242    239    249}
\definecolor{mycol5}{RGB}{220    243    242}
\definecolor{mycol6}{RGB}{222    242    217}
\definecolor{mycol7}{RGB}{247    238    237}
\definecolor{myred}{rgb}{1, 0.92, .56}
\ifcvprfinal\pagestyle{empty}\fi
\begin{document}

\title{ManifoldNorm: Extending normalizations on Riemannian Manifolds}

\author{Rudrasis Chakraborty\\
University of California\\
Berkeley, CA, USA\\
{\tt\small rudrasischa@gmail.com}
}

\maketitle


\begin{abstract}
  Many measurements in computer vision and machine learning manifest as non-Euclidean data samples. Several researchers recently extended a number of deep neural network architectures for manifold valued data samples. Researchers have proposed models for manifold valued spatial data which are common in medical image processing including processing of diffusion tensor imaging (DTI) where images are fields of $3\times 3$ symmetric positive definite matrices or representation in terms of orientation distribution field (ODF) where the identification is in terms of field on hypersphere. There are other sequential models for manifold valued data that recently researchers have shown to be effective for group difference analysis in study for neuro-degenerative diseases. Although, several of these methods are effective to deal with manifold valued data, the bottleneck includes the instability in optimization for deeper networks. In order to deal with these instabilities, researchers have proposed residual connections for manifold valued data. One of the other remedies to deal with the instabilities including gradient explosion is to use normalization techniques including {\it batch norm} and {\it group norm} etc.. But, so far there is no normalization techniques applicable for manifold valued data. In this work, we propose a general normalization techniques for manifold valued data. We show that our proposed manifold normalization technique have special cases including popular batch norm and group norm techniques. On the experimental side, we focus on two types of manifold valued data including manifold of symmetric positive definite matrices and hypersphere. We show the performance gain in one synthetic experiment for moving MNIST dataset and one real brain image dataset where the representation is in terms of orientation distribution field (ODF). 
\end{abstract}

\section{Introduction}\label{intro}
Geometric deep learning is a relatively nascent field which involves developing techniques to deal with manifold-valued samples, for example, a 2D matrix-valued image where at each pixel we have a matrix. Though traditional deep learning is an obvious choice for processing, in order to process structured matrices one needs to resort to sophisticated geometric tools. Recently, several researchers \cite{bronstein2017geometric,chakraborty2018statistical,cohen2019general,cohen2018spherical,cohen2019gauge,esteves2018learning,chakraborty2018cnn,kondor2018clebsch,sommer2019horizontal,chakraborty2018manifoldnet,kondor2018generalization} proposed deep learning tools tailored for non-Euclidean data. There are two types of data domains applicable for manifold valued deep learning: \begin{inparaenum}[\bfseries (1)] \item each sample is a function on a manifold, i.e., $X_i:\mathcal{M} \rightarrow \mathbf{R}$ \item each sample is manifold valued grid, i.e., $X_i:\mathbf{Z}^n \rightarrow \mathcal{M}$ \end{inparaenum}. A special case for the second type of data domain is grayscale images where $n=2$ and $\mathcal{M} = \mathbf{R}$. 

Some of the recent works where the data domain is function on manifold include Spherical CNN \cite{cohen2018spherical,esteves2018learning,kondor2018clebsch}, Homogeneous CNN \cite{chakraborty2018cnn,kondor2018generalization,cohen2019general}. Cohen et al. \cite{cohen2018spherical} extended the convolution operator on hypersphere and showed that the proposed convolution operator is equivariant to the group of rotations. In Esteves et al. \cite{esteves2018learning}, the authors proposed a different way to do spherical convolution by using the definition proposed by Driscoll and Healy \cite{driscoll1994computing}. Their proposed convolution is equivariant to planar rotations. In \cite{chakraborty2018cnn,cohen2019general,kondor2018generalization}, the authors proposed a more general definition of convolution on a Riemannian homogeneous space and proved that their definition is equivariant to the group that naturally acts on the homogeneous space. Moreover, in \cite{cohen2019gauge}, the authors went one step further and proposed a Gauge equivariant convolution operator. 

Several researchers focused on the second type of data domain where each sample is a manifold valued grid. In \cite{chakraborty2018manifoldnet}, the authors proposed a convolution neural network on a general Riemannian manifold. They proposed a definition of convolution equivariant to the isometry group acts on the underlying manifold. The authors proposed convolution, non-linearity and invariant fully connected layers. In this wok, we propose normalization layer appropriate for the formalism of CNN for a Riemannian manifold proposed in \cite{chakraborty2018manifoldnet}. In \cite{bouza2020mvc}, the authors proposed a CNN for manifold valued data based on defining convolution on tangent spaces. Several other researchers including \cite{chakraborty2018statistical}  proposed a statistical recurrent model for manifold valued sequential datasets.

One of the obstacles in defining deep neural network with a large number of layers is the explosion of gradient. Several ``remedies'' have been proposed including residual connection \cite{he2016deep}, batch normalization \cite{ioffe2015batch}. Recently, authors in \cite{zhen2019dilated} proposed residual connections for convolutions on manifold valued data and have achieved more stable optimization technique. This motivates us to define normalization techniques on a general Riemannian manifold. In \cite{brooks2019riemannian}, the authors proposed batch normalization for manifold of symmetric positive definite matrices. In this work, we generalize the work in two ways \begin{inparaenum}[\bfseries (a)] \item we extend normalization technique for a Riemannian manifold \item moreover, inspired by the recent work of group normalization \cite{wu2018group}, we define group normalization for a general Riemannian manifold. \end{inparaenum}

In this work, our contribution is as follows: \begin{inparaenum}[\bfseries (a)] \item we propose a Riemannian group normalization technique appropriate for Riemannian homoegenous spaces \item we prove for matrix Lie groups our proposed group normalization satisfies the desired first and second order moments \item proof of concept type experiments show the performance gain of our proposed method over several state-of-the-art manifold valued baselines. \end{inparaenum}

\section{Preliminaries}
\label{sec:prelim} 

This section is intended for a very brief summarization of some differential geometric terminologies we are going to use in the rest of the paper. For a more concrete treatment, the readers are encouraged to look at \cite{boothby1986introduction}. 

\begin{definition}[Riemannian manifold and metric]
 Let $(\mathcal{M},g^\mathcal{M})$ be a orientable complete Riemannian manifold with a Riemannian metric $g$, i.e., $\forall x \in \mathcal{M}: g_x:T_x{\mathcal{M}}\times T_x{\mathcal{M}} \rightarrow \mathbf{R}$ is a bi-linear symmetric positive definite map, where $T_x\mathcal{M}$ is the tangent space of $\mathcal{M}$ at $x\in \mathcal{M}$. Let $d: \mathcal{M} \times \mathcal{M} \rightarrow [0,\infty)$ be the distance induced from the Riemannian metric $g$.
\end{definition}

\begin{definition}
\label{theory:def1}
Let $p \in \mathcal{M}$, $r > 0$. Define $\mathcal{B}_r(p) =
\left\{ q \in \mathcal{M} | d(p,q) < r \right\}$ to be a open ball at
$p$ of radius $r$.
\end{definition}

\begin{definition}[Local injectivity radius \cite{groisser2004newton}]
\label{theory:def2}
The local injectivity radius is defined as $r_{\text{inj}}(p) = \sup \left\{ r
| \text{Exp}_p : (\mathcal{B}_r(\mathbf{0}) \subset T_p \mathcal{M} )
\rightarrow \mathcal{M} \text{ is defined}\right.$ $\left.\text{and is a
diffeomorphism}\right.$ $\left.\text{onto its image} \right\}$ at $p \in \mathcal{M}$.
The {\it injectivity radius} \cite{manton2004globally} of
$\mathcal{M}$ is defined as $r_{\text{inj}}(\mathcal{M}) =
\inf_{p \in \mathcal{M}} \left\{r_{\text{inj}}(p)\right\}$.
\end{definition}
Within $\mathcal{B}_r(p)$, where $r \leq r_{\text{inj}}(\mathcal{M})$,
the mapping $ \text{Exp}^{-1}_p: \mathcal{B}_r(p) \rightarrow
\mathcal{U} \subset T_p \mathcal{M} \subset \mathbf{R}^m$, is called the inverse
Exponential/Log map, $m$ is the dimension of $\mathcal{M}$. 

\begin{definition}
\label{theory:def3.5}
Given $p, q
\in \mathcal{U} \subset \mathcal{B}_r(p)$, where $r \leq r_{\text{inj}}(\mathcal{M})$, the (shortest) geodesic is the smooth curve $\Gamma: [0,1] \rightarrow \mathcal{M}$ with $\Gamma(0) = p$, $\Gamma(1) = q$ and $d(p, q) = \int_{[0,1]} \sqrt{g_{\Gamma(t)}\left(\frac{d\Gamma}{dt}, \frac{d\Gamma}{dt}\right)} dt$.

there exists a unique length minimizing geodesic
segment between $p$ and $q$ and the geodesic segment lies entirely in
$\mathcal{U}$.
\end{definition}

\begin{definition}\cite{chavel1984eigenvalues}
\label{theory:def4}
$\mathcal{U} \subset \mathcal{M}$ is strongly convex if for all $p, q
\in \mathcal{U}$, there exists a unique length minimizing geodesic
segment between $p$ and $q$ and the geodesic segment lies entirely in
$\mathcal{U}$.
\end{definition}

\begin{definition} \cite{groisser2004newton}
\label{theory:def5}
Let $p \in \mathcal{M}$. The {\it local convexity radius} at $p$,
$r_{\text{cvx}}(p)$, is defined as $r_{\text{cvx}}(p) = \sup\left\{ r
\leq r_{\text{inj}}(p) | \mathcal{B}_r(p) \text{ is strongly
  convex}\right\}$. The {\it convexity radius} of $\mathcal{M}$ is
defined as $r_{\text{cvx}}(\mathcal{M}) = \inf_{p \in \mathcal{M}}
\left\{ r_{\text{cvx}}(p)\right\}$.
\end{definition}

In rest of the paper, we assume data points are within the geodesic ball of radius less than $\min\{r_{\text{inj}}(\mathcal{M}), r_{\text{cvx}}(\mathcal{M})\}$. 

\begin{definition}[{\bf Group of isometries of $\mathcal{M}$} ($I\left(\mathcal{M}\right)$)]
\label{theory:def6}
A diffeomorphism $\phi: \mathcal{M} \rightarrow \mathcal{M}$ is an
  isometry if it preserves distance, i.e., $d\left(\phi\left(x\right),
  \phi\left(y\right)\right) = d\left(x, y\right)$. The set
  $I(\mathcal{M})$ of all isometries of $\mathcal{M}$ forms a group
  with respect to function composition.
\end{definition}

Rather than write an isometry
  as a function $\phi$, we will write it as a group action.
  Henceforth, let $G$ denote the group $I(\mathcal{M})$, and for $g
  \in G$, and $x \in \mathcal{M}$, let $g\cdot x$ denote the result of
  applying the isometry $g$ to point $x$. 

  \begin{definition} [Riemannian homogeneous spaces \cite{helgason2001differential}] 
 Given $\mathcal{M}$ and $G$ as defined above, let $G$ acts transitively on $\mathcal{M}$, i.e., given $p, q\in \mathcal{M}$, $\exists g\in G$, such that $q = g\cdot  p$. Let $H = \textsf{Stab}(I)$, where $I$ is the ``origin'' of $\mathcal{M}$ where $\textsf{Stab}(I) = \left\{g \in G| g\cdot I = I\right\}$ is the stabilizer of $I$. Then $\mathcal{M}$ is a Riemannian homogeneous space and can be identified as the quotient space $G/H$.   
  \end{definition} 
 
Some of the examples of Riemannian homogeneous spaces include Euclidean space, hypersphere, hyperbolic space, Lie groups (will be defined next).
\begin{definition}[Lie group \cite{hall2015lie}]
$\mathcal{M}$ is called a Lie group if \begin{inparaenum}[\bfseries (a)] \item $\mathcal{M}$ is a group with the group operation $\circ$ \item the group operations $(g, h)\mapsto g\circ h$ and $g \mapsto g^{-1}$ are smooth. \end{inparaenum}  
\end{definition}  

\begin{definition}[Lie algebra \cite{hall2015lie}]
The tangent space of $\mathcal{M}$ at identity, $I$, i.e., $T_I \mathcal{M}$ is a vector space and is termed as Lie algebra, $\mathfrak{M}$. Lie algebra is a vector space. 
\end{definition}  

Observe the basic properties of a matrix Lie group, $\mathcal{M}$: \begin{inparaenum}[\bfseries (a)] \item the distance on $\mathcal{M}$ can be defined as $d(X, Y) = \|\textsf{logm}\left(X^{-1}Y\right)\|$, here $\textsf{logm}$ is the matrix logarithm and $\|.\|$ is the Frobenius norm \item $\textsf{logm}: \mathcal{M} \rightarrow \mathfrak{m}$ is the mapping from Lie group to Lie algebra and $\textsf{expm}$ is the inverse of this mapping \item Given $X, Y\in \mathcal{M}$, the shortest geodesic from $X$ to $Y$ is given by $\Gamma_X^Y(t) = X\textsf{expm}\left(t\textsf{logm}\left(X^{-1}Y\right)\right)$\end{inparaenum}

In the rest of the paper, we assume $\mathcal{M}$ to be a Riemannian homogeneous space. Moreover, we will assume $\mathcal{M}$ is associated with the Levi-Civita connection: $\nabla: V_{\mathcal{M}} \times V_{\mathcal{M}} \rightarrow V_{\mathcal{M}}$ where $V_{\mathcal{M}}$ is the space of vector fields on $\mathcal{M}$ \cite{boothby1986introduction}. 

Now, we give some definitions including Parallel transport, Fr\'{e}chet mean which are needed in order to define Riemannian normalization. 

\begin{definition}[{\bf Parallel transport on $\mathcal{M}$} ($\Gamma_{p \rightarrow q} \left(\mathbf{v}\right)$)]
\label{theory:def7}
Let $p, q\in \mathcal{M}$ and $\mathbf{v} \in T_p\mathcal{M}$. Let $\gamma: [0,1] \rightarrow \mathcal{M}$ be the (shortest) geodesic with $\gamma(0) = p$ and $\gamma(1) = q$. A vector field $V$ is said to be parallel transport of $\mathbf{v}$ along $\gamma$ provided that $\left\{V(t), t\in [0,1]\right\}$ is a vector field for which $V(0) = \mathbf{v}$. We assign $V(1) \in T_q\mathcal{M}$ to be $\Gamma_{p\rightarrow q} \left(\mathbf{v}\right)$.
\end{definition}

 Note that the term parallel is because of $\nabla_{\gamma'(t)} V(t)|_{t_0} = 0$, for all $t_0\in [0,1]$.

\begin{definition}[{\bf weighted Fr\'{e}chet mean}]
\label{theory:def9}
Given $\left\{X_i\right\}_{i=1}^N \subset \mathcal{M}$, and a set of weights $\left\{w_i\right\}_{i=1}^N \subset (0,1]$ with $\sum_{i=1}^N w_i = 1$ (i.e., $\left\{w_i\right\}$ satisfy convexity constraint), we can define ``the'' weighted Fr\'{e}chet mean (wFM) \cite{frechet1948elements} as the minimizer of the weighted variance, i.e.,
$$
\textsf{wFM}\left(\left\{X_i\right\}, \left\{w_i\right\}\right) = \argmin_{M\in \mathcal{M}} \sum_{i=1}^N w_i d^2(X_i, M).
$$
\end{definition}

We use the following proposition \cite{afsari2011riemannian} to argue that if the samples are within the geodesic ball of aforementioned radius, then the $\textsf{wFM}$ exists and is unique. Note that if $w_i = \sfrac{1}{N}$, for all $i$, then we get ``the'' Fr\'{e}chet mean (FM) defined as
\begin{align}
\textsf{FM}\left(\left\{X_i\right\}\right) = \argmin_{M\in \mathcal{M}} \sum_{i=1}^N d^2(X_i, M).
\label{fmeqn}
\end{align}

Given $\left\{X_i\right\}_{i=1}^N \subset \mathcal{M}$ we will use a provably convergent recursive estimator of $\textsf{wFM}$ as proposed in Chakraborty et al. \cite{chakraborty2018manifoldnet}. The recursive $\textsf{wFM}$ estimator, $M_N$, is defined as
\begin{align}
M_1 = X_1 \qquad\qquad M_{n+1} = \Gamma_{M_n}^{X_{n+1}} \left(\frac{w_{n+1}}{\sum_{j=1}^{n+1}w_j}\right)
\label{fmrec}
\end{align}

Recently in \cite{chakraborty2018manifoldnet}, the authors proposed a manifold valued deep neural network where they defined convolution operator using $\textsf{wFM}$. In the next section, we first formally define Riemannian normalization before recalling the definition of convolution. 

\section{Riemannian normalization}
In this section, we formulate a general normalization scheme on a Riemannian manifold. We propose algorithms for normalization on a general homogeneous space and a Lie group in the subsequent subsections. Before that we formulate the problem of Riemannian normalization in a general form and show that the popular {\it batch normalization}, {\it group normalization}, {\it layer norm} are special cases of our formulation when the manifold is an Euclidean space.

\begin{definition}[{\bf Riemannian normalization}]
\label{theory:def10}
Given $\left\{X_{i_1,i_2,i_3,i_n,i_c}\right\} \subset \mathcal{M}$ with indices $i1, i2, i3$ run over the spatial 3D dimension (correspond to three dimension of a 3D volume), $i_n$ and $i_c$ are the indices over the number of samples and number of channels respectively, the Riemannian normalization normalize the first order and second order moments over specific index (or a set of indices). Let $\mathcal{S}$ be the set over which we desire to perform the normalization. Depending on the construction of the set $\mathcal{S}$ we get different types of normalization. As for an example, if $\mathcal{S} = \left\{X_{i_1,i_2,i_3,i_n,i_c} | i_c = c\right\}$ then it is batch normalization, here $c$ is a channel index $c$. In other words, the batch normalization is over $\left\{(i_1, i_2, i_3, i_n)\right\}$ indices.
\end{definition}

Given a set $\mathcal{S}$, the Riemannian normalization tries to fit a distribution with desired first and second order moments. Before formally defining distribution on a Riemannian homogeneous space, we first give examples of different kinds of Riemannian normalization, i.e., different choices of $\mathcal{S}$.

\begin{enumerate}[\bfseries (a)]
\item {\it Riemannian batch normalization:} If $\mathcal{S} = \left\{X_{i_1,i_2,i_3,i_n,i_c} | i_c = c\right\}$ for a channel $c$, then the normalization is termed as Riemannian batch normalization. Hence,  the batch normalization is over $\left\{(i_1, i_2, i_3, i_n)\right\}$ indices.
\item {\it Riemannian layer normalization:} If $\mathcal{S} = \left\{X_{i_1,i_2,i_3,i_n,i_c} | i_n = n\right\}$ for a sample $n$, then the normalization is termed as Riemannian layer normalization. Hence,  the layer normalization is over $\left\{(i_1, i_2, i_3, i_c)\right\}$ indices.
\item {\it Riemannian instance normalization:} If $\mathcal{S} = \left\{X_{i_1,i_2,i_3,i_n,i_c} | i_c = c, i_n=n \right\}$ for a channel $c$ and sample $n$, then the normalization is termed as Riemannian instance normalization. Hence,  the instance normalization is over $\left\{(i_1, i_2, i_3)\right\}$ indices.
\item {\it Riemannian group ormalization:} If $\mathcal{S} = \left\{X_{i_1,i_2,i_3,i_n,i_c} | i_n=n, i_c \in C_g \right\}$ for a sample $n$ and a channel group $C_g = \{c_1, \cdots, c_n\}$, then the normalization is termed as Riemannian group normalization. Hence,  the group normalization is over $\left\{(i_1, i_2, i_3, i_c)\right\}$ indices but $i_c$ is over a group of channels $C_g$.
\end{enumerate}
A visual description of different kind of Riemannian normalization is shown in Fig. \eqref{fig1}. 
\begin{figure}[!b]
    \setlength{\abovecaptionskip}{-0cm}
    \setlength{\belowcaptionskip}{-0cm} 
        \centering
                \includegraphics[width=0.9\columnwidth]{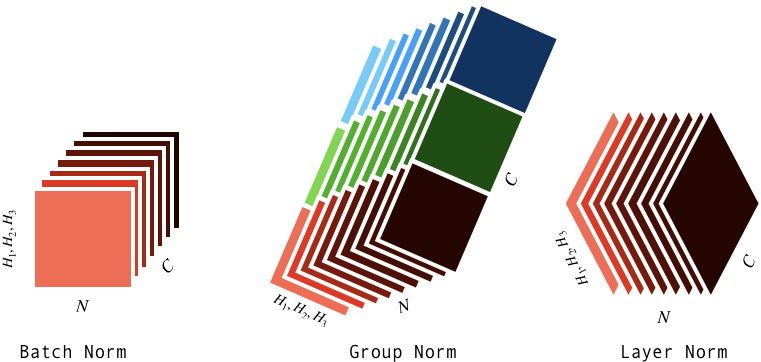}
               \caption{\footnotesize Pictorial description of various kind of Riemannian normalization.}\label{fig1}
\end{figure}

\subsection{Riemannian homogeneous spaces}
In this subsection, we assume $\mathcal{M}$ to be a Riemannian homogeneous space of dimension $m$. We assume the distance $d$ used is induced by the $G$-invariant Riemannian metric where $G$ is the group transitively acts on $\mathcal{M}$. Hence, the isometry group under this distance $d$ is the group $G$. We use $\cdot$ to denote the group action as given by $(g, M) \mapsto g\cdot M$, where, $g\cdot M \in \mathcal{M}$.  

Let $M \in \mathcal{M}$, there exists an isomorphism $\iota: T_M \mathcal{M} \rightarrow \mathbf{R}^m$ from the tangent space at $M$ to the Euclidean space $\mathbf{R}^m$. Now, we are ready to give the definition of Gaussian distribution. 

\begin{definition}[{\bf Gaussian distribution \cite{pennec2004probabilities}}]
\label{theory:def11}
Given a Riemannian homogeneous space $\mathcal{M}$ (of dimension $m$) with the distance $d$ and group $G$ acts of $\mathcal{M}$, we can define Gaussian distribution  with location parameter $M$ and concentration matrix $\Delta$ as:
\begin{align}
    f(X|M;\Delta) &= k\exp\left(-\frac{\mathbf{v}^t\Delta\mathbf{v}}{2}\right)
\end{align}
where the normalization constant $k$ and the covariance matrix $\Sigma$ are given as follows.  
$$
k^{-1} = \int_{\mathcal{M}} \exp\left(-\frac{\mathbf{v}^t\Delta\mathbf{v}}{2}\right) \omega(X)
$$
$$
\Sigma = k \int_{\mathcal{M}} \mathbf{v}\mathbf{v}^t \exp\left(-\frac{\mathbf{v}^t\Delta\mathbf{v}}{2}\right) \omega(X)
$$
here, $\omega:\mathcal{M} \rightarrow [0,\infty)$ is the Riemannian volume density and $\mathbf{v} = \textsf{Exp}^{-1}_M(X)$. This definition amounts to defining truncated Gaussian distribution of the exponential chart map.
\end{definition}

Given a set $\mathcal{S}$ of samples on which we need to apply normalization, in Alg. \eqref{alg1train} which present the training step of Riemannian normalization. 

\begin{algorithm}
{ \KwIn{A batch of samples $\mathcal{S} = \left\{X_i\right\}_{i=1}^N$; bias $g\in G$; running mean $M$; positive diagonal scaling matrix $S \in \mathbf{R}^{m\times m}$.} 
    \KwOut{updated running mean $M$.}
   Compute batch mean, $M_b$ of $\left\{X_i\right\}_{i=1}^N$ using Riemannian metric (incremental FM in Eq. \eqref{fmrec})\;
    Update running mean $M$ by $M_b$ (incremental FM in Eq. \eqref{fmrec})\;
    $X_i\leftarrow \textsf{Exp}\left(\Gamma_{M_b \rightarrow I}\left(\mathbf{v}_i\right)\right)$, where $\mathbf{v}_i =  \textsf{Exp}^{-1}_{M_b}\left(X_i\right)$\;
    $X_i\leftarrow \textsf{Exp}\left(\iota^{-1}\left(S \iota\left(\mathbf{v}_i\right)\right)\right)$, where  $\mathbf{v}_i =  \textsf{Exp}^{-1}_{I}\left(X_i\right)$\;
    $X_i \leftarrow g \cdot X_i$.
       
    \caption{{Training step of normalization on a Riemannian homogeneous space}}
    \label{alg1train}
    }
\end{algorithm}

In Alg. \eqref{alg1test}, we present the testing algorithm. Notice that in training algorithm, we update the running mean of the distribution, while for testing algorithm we use the final learned running mean $M$.

\begin{algorithm}
{ \KwIn{A batch of samples $\mathcal{S} = \left\{X_i\right\}_{i=1}^N$; bias $g\in G$; learned running mean $M$; diagonal scaling matrix $S$.} 
   $X_i\leftarrow \textsf{Exp}\left(\Gamma_{M \rightarrow I}\left(\mathbf{v}_i\right)\right)$, where $\mathbf{v}_i =  \textsf{Exp}^{-1}_{M}\left(X_i\right)$\;
    $X_i\leftarrow \textsf{Exp}\left(\iota^{-1}\left(S \iota\left(\mathbf{v}_i\right)\right)\right)$, where  e $\mathbf{v}_i =  \textsf{Exp}^{-1}_{I}\left(X_i\right)$\;
    $X_i \leftarrow g \cdot X_i$.
       
    \caption{{Testing step of normalization on a Riemannian homogeneous space}}
    \label{alg1test}
    }
\end{algorithm}

Note that, the parallel translate operation does not guarantee the FM of the samples in $\mathcal{S}$, as given the mean $M$ and the desired mean $I$, although there exists a group element $g_M \in G$ such that $g_M\cdot I = M$ (as a property of the Riemannian homogeneous space), $g_M$ does not have a closed form in general. Hence, we will focus on a subclass of Riemannian homogeneous spaces, namely Lie groups, where because of the group inverses we can get a closed form of $g_M$. 

Note that, the above algorithms can be applicable to a general Riemannian manifold $\mathcal{M}$ with closed form for geodesic, parallel transport and we will use $G = I(\mathcal{M})$, the isometry group. Before giving the formulation for Lie groups, we present two examples of homogeneous spaces with the appropriate operations needed for Riemannian normalization.

\subsubsection{Riemannian homogeneous spaces: some examples}

\paragraph{SPD}: Let $\mathcal{M}$ be the manifold of $n\times n$ symmetric positive definite matrices with affine-invariant metric. Below, we give closed form of the operations needed in the normalization algorithm.

\begin{enumerate}[\bfseries (a)]
\item {\it Distance:} $d(X,Y) = \|\textsf{logm}\left(X^{-1}Y\right)\|$.
\item {\it $G$:} The group that acts on $\mathcal{M}$ (isometry group) is $G = \textsf{GL}(n)$, $n\times n$ invertible matrices. 
\item {\it Group action:} $g\cdot X \mapsto gXg^T$.
\item {\it Log map:} $\textsf{Exp}^{-1}_X(Y) = X^{1/2} \textsf{logm}\left(X^{-1/2}YX^{-1/2}\right)X^{1/2}$, where, $\textsf{logm}(Z) = U\log(D)U^T$, where $Z=UDU^T$.
\item {\it Exp map:} $\textsf{Exp}_X(V) = X^{1/2} \textsf{expm}\left(X^{-1/2}VX^{-1/2}\right)X^{1/2}$, where, $\textsf{expm}(Z) = U\exp(D)U^T$, where $Z=UDU^T$.
\item {\it Parallel transport:} $\Gamma_{X\rightarrow Y}(V) = Y^{1/2}\left(X^{-1/2}VX^{-1/2}\right)Y^{1/2}$.
\end{enumerate}

\paragraph{$\mathbf{S}^{n}$}: Let $\mathcal{M}$ be $n$-dimensional unit hypersphere with arc-length metric. Below, we give closed form of the operations needed in the normalization algorithm.

\begin{enumerate}[\bfseries (a)]
\item {\it Distance:} $d(\mathbf{x},\mathbf{y}) = \arccos\left(\mathbf{x}^t\mathbf{y}\right)$.
\item {\it $G$:} The group that acts on $\mathcal{M}$ (isometry group) is $G = \textsf{SO}(n)$, $n\times n$ special orthogonal matrices. 
\item {\it Group action:} $g\cdot \mathbf{x} \mapsto g\mathbf{x}$.
\item {\it Log map:} $\textsf{Exp}^{-1}_{\mathbf{x}}(\mathbf{y}) = \frac{\sin(\theta)}{\theta \left(\mathbf{y} - \mathbf{x}\cos(\theta)\right)}$, where $\theta = d(\mathbf{x}, \mathbf{y})$. 
\item {\it Exp map:} $\textsf{Exp}_{\mathbf{x}}(\mathbf{v}) = \cos(\|\mathbf{v}\|)\mathbf{x} + \sin(\|\mathbf{v}\|)\frac{\mathbf{v}}{\|\mathbf{v}\|}$. 
\item {\it Parallel transport:} $\Gamma_{\mathbf{x}\rightarrow \mathbf{y}}(\mathbf{v}) = \left(\mathbf{v} - \mathbf{w}\frac{\mathbf{w}^t\mathbf{v}}{\|\mathbf{w}\|^2}\right) + \frac{\mathbf{w}^t\mathbf{v}}{\|\mathbf{w}\|^2} \left(\mathbf{x}\left(-\sin(\|\mathbf{w}\|)\|\mathbf{w}\|\right) + \mathbf{w}\cos(\|\mathbf{w}\|)\right)$, where $\mathbf{w} = \textsf{Exp}^{-1}_{\mathbf{x}}\left(\mathbf{y}\right)$.
\end{enumerate}

\subsection{Matrix Lie groups}
In this subsection, we assume $\mathcal{M}$ to be matrix Lie group. As mentioned before, given $X, Y \in \mathcal{M}$, we define the distance as $d(X,Y) = \|\textsf{logm}\left(X^{-1}Y\right)\|$. Notice that, this metric invariant to the left group operation. Formally, given $Z \in \mathcal{M}, d(ZX, ZY) = d(X, Y)$. Hence, the isometry group, $G$ is given by $G =\mathcal{M}$ with respect to the left group operation.

Before defining normalization for matrix Lie groups, we first define Gaussian distribution for matrix Lie groups. Note that although the earlier definition of Gaussian distribution on a Riemannian homogeneous space can be applied here, here we gave a different definition of Gaussian distribution which can be used to define computationally more efficient Riemannian normalization for special cases of matrix Lie groups. 

\begin{definition}[{\bf Gaussian distribution \cite{chakraborty2019statistics}}]
\label{theory:def8}
Given a Riemannian homogeneous space $\mathcal{M}$ with the distance $d$, induced from a Riemannian metric (and group $G$ acts of $\mathcal{M}$), we can define Gaussian distribution  with location parameter $M\in \mathcal{M}$ and variance $\sigma^2>0$, denoted by $\mathcal{N}\left(M, \sigma^2\right)$ as:
\begin{align}
    f(X|M,\sigma^2) &= k(\sigma)\exp\left(-\frac{d(X,M)^2}{2\sigma^2}\right)
    \label{gauss}
\end{align}
where, $k$ is the normalizing constant.
\end{definition}

Before presenting the algorithm of Remannian normalization on matrix Lie groups, we first start with stating some propositions. 

\begin{proposition}
Given $\left\{X_i\right\}\subset \mathcal{M}$ i.i.d. samples drawn from $\mathcal{N}\left(M, \sigma^2\right)$, the maximum likelihood estimator (MLE) of $M$ is the sample Fr\'{e}chet mean (FM) of $\left\{X_i\right\}$.
\label{mleprop}
\end{proposition}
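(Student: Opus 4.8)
The plan is to write out the log-likelihood of the i.i.d.\ sample, observe that for a Riemannian homogeneous space the normalizing constant $k(\sigma)$ in \eqref{gauss} does not depend on the location parameter $M$, and then read off that maximizing the likelihood in $M$ is \emph{verbatim} the Fr\'echet-mean variational problem \eqref{fmeqn}.

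First I would form the joint density $\prod_{i=1}^N f(X_i\mid M,\sigma^2) = k(\sigma)^N\exp\!\left(-\tfrac{1}{2\sigma^2}\sum_{i=1}^N d(X_i,M)^2\right)$, so that the log-likelihood is $\ell(M,\sigma^2)= N\log k(\sigma) - \tfrac{1}{2\sigma^2}\sum_{i=1}^N d(X_i,M)^2$. The only term in which $M$ could enter besides the sum of squared distances is $k(\sigma)$, so the whole argument hinges on showing $k$ is location-free.

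Second — the crux — I would verify that $k(\sigma)$ depends only on $\sigma$. By the definition of the Gaussian normalizer, $k(\sigma)^{-1} = \int_{\mathcal{M}} \exp\!\left(-\tfrac{d(X,M)^2}{2\sigma^2}\right)\omega(X)$, where $\omega$ is the Riemannian volume density. Since $G = I(\mathcal{M})$ acts transitively on $\mathcal{M}$, choose $g\in G$ with $g\cdot I = M$ for the origin $I$; substituting $X = g\cdot Y$ and using that $g$ is an isometry, $d(g\cdot Y, g\cdot I) = d(Y,I)$, together with invariance of the Riemannian volume density under isometries, the integral becomes $\int_{\mathcal{M}} \exp\!\left(-\tfrac{d(Y,I)^2}{2\sigma^2}\right)\omega(Y)$, which carries no dependence on $M$. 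Hence $k(\sigma)$ is a function of $\sigma$ alone.

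Third, for each fixed $\sigma^2>0$ the term $N\log k(\sigma)$ is constant in $M$ and $-\tfrac{1}{2\sigma^2}<0$, so $\argmax_{M\in\mathcal{M}}\ell(M,\sigma^2) = \argmin_{M\in\mathcal{M}}\sum_{i=1}^N d(X_i,M)^2 = \textsf{FM}(\{X_i\})$ by \eqref{fmeqn}; since this identity is independent of $\sigma^2$, profiling out (or jointly optimizing over) $\sigma^2$ leaves the $M$-component of the MLE equal to the sample FM. Existence and uniqueness of the minimizer follow from the result of Afsari cited after Definition~\ref{theory:def9}, under the standing assumption that the $X_i$ lie in a geodesic ball of radius below $\min\{r_{\text{inj}}(\mathcal{M}), r_{\text{cvx}}(\mathcal{M})\}$. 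The main obstacle is precisely the second step: the location-independence of the Gaussian normalizer is exactly where homogeneity (a transitive action by volume-preserving isometries) is essential, and it is what makes the MLE-equals-FM statement go through on $\mathcal{M}$ even though it may fail on a general Riemannian manifold.
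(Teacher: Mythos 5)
Your proposal is correct and follows essentially the same route as the paper: write the log-likelihood, note that the normalizer contributes only a constant in $M$, and identify the remaining maximization with the Fr\'echet-mean problem of Eq.~\eqref{fmeqn}. The only difference is that you explicitly justify the location-independence of $k(\sigma)$ via transitivity of the isometry action and invariance of the volume density, a step the paper takes for granted by writing $k$ as a function of $\sigma$ alone.
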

\begin{proof}
From Eq. \eqref{gauss}, we can get the log-likelihood, $\ell\left(M; \left\{X_i\right\}, \sigma^2\right)$ as
\begin{align*}
\ell\left(M; \left\{X_i\right\}, \sigma^2\right) = \log(k(\sigma)) - \sum_{i=1}^N \frac{d(X_i, M)^2}{2\sigma^2}
\end{align*}

Now, maximizing $\ell\left(M; \left\{X_i\right\}, \sigma^2\right)$ is equivalent to minimizing $\sum_{i=1}^N d(X_i, M)^2$. Hence, using Eq. \eqref{fmeqn}, we can conclude that the MLE of $M$ is the FM of $\left\{X_i\right\}$.
\end{proof}
Now, using the MLE and FM equivalence as showed in Prop. \eqref{mleprop}, we can state the following proposition. 

\begin{proposition}
Given $X \sim \mathcal{N}\left(M, \sigma^2\right)$ with parameters $M \in \mathcal{M}$ and $\sigma^2>0$, $ZX \sim \mathcal{N}\left(ZM, \sigma^2\right)$, for all $Z \in \mathcal{M}$.
\label{prop2}
\end{proposition}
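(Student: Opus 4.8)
The plan is to show that the density of $ZX$ equals the density of a $\mathcal{N}(ZM,\sigma^2)$ random variable, exploiting the fact that left translation by $Z$ is an isometry of $\mathcal{M}$. Concretely, since $d(ZX,ZY)=d(X,Y)$ for all $X,Y\in\mathcal{M}$ (the metric $d(X,Y)=\|\textsf{logm}(X^{-1}Y)\|$ is left-invariant, as noted just before Definition~\ref{theory:def8}), the map $X\mapsto ZX$ preserves the Riemannian volume density $\omega$ as well. So I would first write down the density of $X$ from Eq.~\eqref{gauss}, namely $f_X(x)=k(\sigma)\exp(-d(x,M)^2/2\sigma^2)$, and then compute the density of $Y=ZX$ by the change-of-variables formula: $f_Y(y)=f_X(Z^{-1}y)\,|\det J|$, where $J$ is the Jacobian of $y\mapsto Z^{-1}y$ with respect to the Riemannian volume form.

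The key step is that left translation is a Riemannian isometry, hence volume-preserving, so the Jacobian factor $|\det J|$ is identically $1$. Then $f_Y(y)=f_X(Z^{-1}y)=k(\sigma)\exp\!\left(-d(Z^{-1}y,M)^2/2\sigma^2\right)$, and applying left-invariance of $d$ once more gives $d(Z^{-1}y,M)=d(ZZ^{-1}y,ZM)=d(y,ZM)$. Therefore $f_Y(y)=k(\sigma)\exp\!\left(-d(y,ZM)^2/2\sigma^2\right)$, which is exactly the density of $\mathcal{N}(ZM,\sigma^2)$; note in particular that the normalizing constant $k(\sigma)$ is unchanged because it is an integral of a function of $d(\cdot,M)$ against $\omega$, and both $d$ and $\omega$ are preserved by the isometry. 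This establishes $ZX\sim\mathcal{N}(ZM,\sigma^2)$.

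Alternatively — and perhaps cleaner given the tools already set up — I could argue at the level of samples and MLE rather than densities: if $\{X_i\}$ are i.i.d.\ $\mathcal{N}(M,\sigma^2)$, then by Proposition~\ref{mleprop} the FM of $\{X_i\}$ is the MLE of $M$; since the FM is equivariant under isometries (the minimizer of $\sum_i d(ZX_i,\cdot)^2$ is $Z$ times the minimizer of $\sum_i d(X_i,\cdot)^2$, again by left-invariance of $d$), the samples $\{ZX_i\}$ have sample FM equal to $Z\cdot\textsf{FM}(\{X_i\})$, consistent with location parameter $ZM$; combined with the fact that $d$ is left-invariant so the ``spread'' term $d(ZX_i,ZM)=d(X_i,M)$ is untouched, the transformed samples follow $\mathcal{N}(ZM,\sigma^2)$. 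I would present the density-level argument as the main proof since it is self-contained.

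The main obstacle is justifying rigorously that left translation preserves the Riemannian volume density $\omega$, which is what kills the Jacobian term; this follows from it being an isometry for the chosen left-invariant metric, but it should be stated explicitly rather than glossed over, and one must make sure the metric on the matrix Lie group really is the left-invariant one inducing $d(X,Y)=\|\textsf{logm}(X^{-1}Y)\|$ (so that $G=\mathcal{M}$ acting by left multiplication is indeed inside the isometry group, as asserted in the text). Everything else is a routine change of variables plus two invocations of left-invariance of $d$.
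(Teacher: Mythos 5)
Your proposal is correct and rests on the same key fact as the paper's proof, namely the left-invariance $d(ZX,ZY)=\|\textsf{logm}(X^{-1}Z^{-1}ZY)\|=d(X,Y)$, which makes the exponent invariant under simultaneous translation of the sample and the location parameter. In fact your version is slightly more complete than the paper's: the paper only verifies the pointwise identity $f(ZX\mid ZM,\sigma^2)=f(X\mid M,\sigma^2)$ and implicitly skips the change-of-variables step, whereas you explicitly note that left translation, being an isometry, preserves the Riemannian volume density so the Jacobian factor is $1$ and the normalizing constant is unchanged.
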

\begin{proof}
In order to prove the proposition it is sufficient to show that $f(X|M,\sigma^2) = c f(ZX|ZM,\sigma^2)$  using Eq. \eqref{gauss} in Def. \eqref{theory:def8} for some constant $c>0$. Observe that,
\begin{align*}
f(ZX|ZM,\sigma^2) &= k(\sigma)\exp\left(-\frac{d(ZX,ZM)^2}{2\sigma^2}\right) \\
&= k(\sigma) \exp\left(-\frac{\|\textsf{logm}\left(X^{-1}Z^{-1}ZM\right)\|^2}{2\sigma^2}\right) \\
&= f(X|M,\sigma^2)
\end{align*}
\end{proof}

\begin{proposition}
Given $X \sim \mathcal{N}\left(I, \sigma^2\right)$ with parameters $I \in \mathcal{M}$ ($I$ to be the identity element) and $\sigma^2>0$, then $Y := \textsf{expm}\left(s\textsf{logm}\left(X\right)\right) \sim \mathcal{N}\left(I, s^2\sigma^2\right)$, for all $Z \in \mathcal{M}$ for all $s>0$.
\label{prop3}
\end{proposition}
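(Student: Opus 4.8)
\emph{Proof sketch.} The plan is to transport the whole computation to the Lie algebra via the matrix logarithm and reduce it to a linear rescaling there, in the density-manipulation style used for Prop.~\eqref{prop2}. Set $\phi_s(X)=\textsf{expm}(s\,\textsf{logm}(X))$, so $Y=\phi_s(X)$; on the geodesic ball around $I$ on which $\textsf{logm}$ is a diffeomorphism onto an open set $\mathcal{U}\subset\mathfrak{m}\cong\mathbf{R}^m$ (the regime assumed throughout the paper), $\phi_s$ is a diffeomorphism which, read in the chart $\textsf{logm}$, is just the scaling $v\mapsto sv$. First I would record the two identities that do all the work: $d(X,I)=\|\textsf{logm}(X^{-1})\|=\|{-}\textsf{logm}(X)\|=\|\textsf{logm}(X)\|$, and $\textsf{logm}(\phi_s(X))=s\,\textsf{logm}(X)$; homogeneity of the Frobenius norm then gives $d(\phi_s(X),I)=s\,d(X,I)$, hence $d(Y,I)^2/(2s^2\sigma^2)=d(X,I)^2/(2\sigma^2)$, which already matches the exponents appearing in $\mathcal{N}(I,\sigma^2)$ and $\mathcal{N}(I,s^2\sigma^2)$.

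Second, I would do the change of variables explicitly. Pushing the law of $X$ from Eq.~\eqref{gauss} through $\textsf{logm}$ gives on $\mathcal{U}$ the density $v\mapsto k(\sigma)\exp(-\|v\|^2/2\sigma^2)\,\rho(v)$, where $\rho$ is the density of the Riemannian volume $\omega$ in this chart (the Jacobian of $\textsf{expm}$); under $\phi_s$, i.e. the substitution $u=sv$ with Jacobian $s^{-m}$, the variable $u=\textsf{logm}(Y)$ acquires the density proportional to $\exp(-\|u\|^2/2s^2\sigma^2)\,\rho(u/s)\,s^{-m}$. Comparing with the target, whose density in the chart is $k(s\sigma)\exp(-\|u\|^2/2s^2\sigma^2)\,\rho(u)$, and using that for the left-invariant metric in force here the log chart is a normal chart, so $\rho$ is constant on $\mathcal{U}$ (equivalently, the Gaussian of Def.~\eqref{theory:def8} is exactly the exponential-chart pushforward of a Euclidean Gaussian, as noted after Def.~\eqref{theory:def11}), the $\rho$-factors cancel and all that is left is the normalization identity $k(s\sigma)=s^{-m}k(\sigma)$, which follows at once from $k(\sigma)^{-1}=\int_{\mathcal{U}}\exp(-\|v\|^2/2\sigma^2)\,\rho\,dv$. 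Therefore $Y\sim\mathcal{N}(I,s^2\sigma^2)$.

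The main obstacle is the bookkeeping of the reference measure. In Prop.~\eqref{prop2} left translation is an isometry and hence automatically $\omega$-preserving, so no Jacobian appears; here $\phi_s$ scales distances by $s$, is not an isometry, and one genuinely needs the change-of-variables factor together with good behaviour of $\rho$ under scaling — this is precisely the step that forces the normal-chart/bi-invariance assumption (or the ``Gaussian $=$ chart-pushforward'' reading of Def.~\eqref{theory:def8}). A secondary caveat is the domain: $\textsf{logm}(\textsf{expm}(s\,\textsf{logm}(X)))=s\,\textsf{logm}(X)$ requires $s\,\textsf{logm}(X)$ to stay inside the injectivity domain, so strictly the claim is read on the geodesic ball fixed in the preliminaries; for $s\le 1$ this is automatic, and for larger $s$ it is the standing assumption of the paper.
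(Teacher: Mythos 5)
Your first paragraph is, almost verbatim, the paper's entire proof: the paper declares it ``sufficient to show $f(X|I,\sigma^2)=c\,f(Y|I,s^2\sigma^2)$'' and then verifies exactly the exponent identity you derive from $\textsf{logm}(Y)=s\,\textsf{logm}(X)$ and $d(Y,I)=\|\textsf{logm}(Y)\|$; it never touches the reference measure. Your second paragraph therefore goes beyond the paper, and you are right that it has to: $\phi_s$ is not an isometry, so unlike Prop.~\ref{prop2} a pointwise proportionality of density values does not by itself identify the law of $Y$ --- the change-of-variables Jacobian $s^{-m}$ and the behaviour of the volume density $\rho$ in the chart are genuinely needed. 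One caution about your resolution: a normal (exponential) chart does not in general have constant volume density --- for $\textsf{SO}(n)$ with the bi-invariant metric, $\rho$ is a nontrivial function of $v$ --- so the cancellation $\rho(u/s)/\rho(u)=1$ that you invoke is a real additional hypothesis, not a consequence of the setup. It does hold in the case the paper actually computes with, namely SPD under the log-Euclidean metric, where $\textsf{logm}$ is a global isometry onto a Euclidean vector space; in general the pushforward of $\mathcal{N}(I,\sigma^2)$ under $\phi_s$ agrees with $\mathcal{N}(I,s^2\sigma^2)$ only up to the factor $\rho(u/s)/\rho(u)$, equivalently one must read the density in Def.~\ref{theory:def8} as taken in the exponential chart rather than against the Riemannian volume $\omega$. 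In short: you reproduce the paper's key computation, correctly diagnose the measure-theoretic gap the paper leaves open, and close it under an extra assumption (constant $\rho$, or log-Euclidean SPD) that should be stated explicitly for the proposition to be literally true.
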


\begin{proof}
Similar to before, it is sufficient to show that $f(X|I,\sigma^2) = c f(Y|I,s^2\sigma^2)$ where, $Y = \textsf{expm}\left(s\textsf{logm}\left(X\right)\right)$, for some constant $c$. Observe that,
\begin{align*}
f(Y|I,s^2\sigma^2) &= k(s\sigma)\exp\left(-\frac{d(Y,I)^2}{2s^2\sigma^2}\right) \\
&= k(s\sigma) \exp\left(-\frac{\|\textsf{logm}\left(Y\right)\|^2}{2s^2\sigma^2}\right) \\
&= k(s\sigma) \exp\left(-\frac{\|\textsf{logm}\left(X\right)\|^2}{2\sigma^2}\right) \\
&= c f(X|I,\sigma^2)
\end{align*}
for some $c>0$. 
\end{proof}

As a corollary of Prop. \eqref{prop3}, we can state the following.
\begin{proposition}
Given $\left\{X_i\right\}_{i=1}^N \subset \mathcal{M}$ and $\left\{w_i\right\}$ satisfying convexity constraint, let $I = \textsf{wFM}\left( \left\{X_i\right\}, \left\{w_i\right\}\right)$ be the wFM. Then, for all $s>0$, $I = \textsf{wFM}\left( \left\{Y_i\right\}, \left\{w_i\right\}\right)$ be the wFM of $\left\{Y_i\right\}$, where, $Y_i\textsf{expm}\left(s\textsf{logm}\left(X_i\right)\right)$. 
\label{prop4}
\end{proposition}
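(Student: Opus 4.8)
The plan is to prove this directly from the first-order optimality characterization of the weighted Fr\'echet mean, reusing the elementary identity that already appears inside the proof of Prop.~\eqref{prop3}. First I would recall that on a matrix Lie group equipped with the left-invariant metric $d(X,Y)=\|\textsf{logm}(X^{-1}Y)\|$ the exponential chart at the identity is simply $\textsf{Exp}_I(V)=\textsf{expm}(V)$ with inverse $\textsf{Exp}^{-1}_I(X)=\textsf{logm}(X)$, and that the Riemannian gradient of $M\mapsto \sum_{i=1}^N w_i\, d^2(X_i,M)$ equals $-2\sum_{i=1}^N w_i\,\textsf{Exp}^{-1}_M(X_i)$. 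Since all samples lie in a geodesic ball of radius strictly below $\min\{r_{\text{inj}}(\mathcal{M}),r_{\text{cvx}}(\mathcal{M})\}$, the cited result of Afsari et al.~\cite{afsari2011riemannian} guarantees existence and uniqueness of the wFM and that the weighted variance restricted to this ball is geodesically convex with a unique stationary point which is its minimizer; hence the assertion ``$I=\textsf{wFM}(\{X_i\},\{w_i\})$'' is equivalent to the single stationarity equation $\sum_{i=1}^N w_i\,\textsf{logm}(X_i)=\mathbf{0}$.

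Next I would transport this stationarity equation through the reparametrization $X\mapsto Y:=\textsf{expm}(s\,\textsf{logm}(X))$. Because $\textsf{logm}$ and $\textsf{expm}$ are mutually inverse diffeomorphisms on the geodesic ball --- the same fact used to pass from the second to the third line in the proof of Prop.~\eqref{prop3} --- we have $\textsf{logm}(Y_i)=\textsf{logm}\big(\textsf{expm}(s\,\textsf{logm}(X_i))\big)=s\,\textsf{logm}(X_i)$. Summing against the weights, $\sum_{i=1}^N w_i\,\textsf{logm}(Y_i)=s\sum_{i=1}^N w_i\,\textsf{logm}(X_i)=\mathbf{0}$, so $I$ satisfies exactly the stationarity condition that characterizes the wFM of $\{Y_i\}$.

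Finally I would close the argument by invoking uniqueness of the stationary point once more: the weighted variance of $\{Y_i\}$ has a unique stationary point inside the ball and that point is its global minimizer, so $\sum_{i=1}^N w_i\,\textsf{logm}(Y_i)=\mathbf{0}$ forces $I=\textsf{wFM}(\{Y_i\},\{w_i\})$, which is the claim. (Equivalently one can read the whole step distributionally: Prop.~\eqref{prop3} together with the MLE--FM equivalence of Prop.~\eqref{mleprop} says the map $X\mapsto\textsf{expm}(s\,\textsf{logm}(X))$ sends $\mathcal{N}(I,\sigma^2)$ to $\mathcal{N}(I,s^2\sigma^2)$ while fixing the population mean; the display above is just the empirical, weighted version of this observation.)

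The hard part is bookkeeping rather than analysis: one must ensure the transformed points $Y_i$ still lie inside a geodesic ball on which the weighted variance is convex and the wFM unique, so that the stationarity characterization applies to $\{Y_i\}$. For $s\le 1$ this is immediate, since $Y_i=\Gamma_{I}^{X_i}(s)$ lies on the minimizing geodesic from $I$ to $X_i$ and hence in the (strongly convex) ball. For $s>1$ one must appeal to --- or mildly strengthen --- the standing assumption that all data relevant to the computation lie within the $\min\{r_{\text{inj}},r_{\text{cvx}}\}$ ball, so that $s\,\textsf{logm}(X_i)$ remains in the domain where $\textsf{expm}$ is a diffeomorphism and $\textsf{logm}(Y_i)=s\,\textsf{logm}(X_i)$ genuinely holds. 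The only other routine checks are the gradient/stationarity formula for the weighted variance and the identification $\textsf{Exp}^{-1}_I=\textsf{logm}$, both standard for left-invariant metrics on matrix Lie groups.
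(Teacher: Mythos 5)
Your argument is correct, but it is not the one the paper gives, and the difference is worth spelling out. The paper's proof is a single display: it observes that $\sum_i w_i\, d^2(Y_i,I)=s^2\sum_i w_i\, d^2(X_i,I)$ --- which holds because $d(Y_i,I)=\|\textsf{logm}(Y_i)\|=s\,\|\textsf{logm}(X_i)\|=s\, d(X_i,I)$ --- and then asserts the ``if and only if.'' Strictly read, comparing the two objective values at the single point $M=I$ does not by itself identify the minimizer; the scaling $d(Y_i,M)=s\,d(X_i,M)$ fails for general $M\neq I$ because matrix logarithms do not commute, so the paper's one-liner is really shorthand for exactly the first-order reasoning you supply. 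You instead characterize the wFM by the stationarity equation $\sum_i w_i\,\textsf{Exp}^{-1}_M(X_i)=\mathbf{0}$, note that at $M=I$ this reads $\sum_i w_i\,\textsf{logm}(X_i)=\mathbf{0}$, observe that the reparametrization multiplies every $\textsf{logm}(X_i)$ by $s$ and hence preserves the equation, and close with uniqueness of the stationary point inside the convexity ball \cite{afsari2011riemannian}. That route is more rigorous and makes explicit two hypotheses the paper leaves silent: that $s\,\textsf{logm}(X_i)$ must remain in the injectivity domain so that $\textsf{logm}(Y_i)=s\,\textsf{logm}(X_i)$ genuinely holds (automatic for $s\le 1$, since $Y_i$ then lies on the minimizing geodesic from $I$ to $X_i$, but an added assumption for $s>1$), and that the transformed points must stay in a ball on which the weighted variance has a unique stationary point. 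What the paper's computation buys is brevity; what yours buys is an actual proof.
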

\begin{proof}
\begin{align*}
\sum_{i=1}^N w_i d^2(Y_i, I) &=  s^2\sum_{i=1}^N w_i d^2(X_i, I) 
\end{align*}
Hence, $I$ to be wFM of $\left\{X_i\right\}$ if and only if $I$ to be wFM of $\left\{Y_i\right\}$.
\end{proof}
Using Props. \eqref{prop3} and \eqref{prop4} we can propose our formulation of normalization in Alg. \eqref{alg2train}. Before giving the testing algorithm, we like to discuss some key points of Alg. \eqref{alg2train} as listed next. \begin{inparaenum}[\bfseries (a)] \item In line 3, we adjust the batch mean to be $I$ (the identity element of $\mathcal{M}$) \item In line 4, the batch mean remains unchanged (courtesy of Prop. \ref{prop4}) but the scaling parameter of the distribution is scaled as stated in Prop. \ref{prop3} \item In line 5, we change the batch mean from $I$ to $g$. \end{inparaenum}

\begin{algorithm}
{ \KwIn{A batch of samples $\mathcal{S} = \left\{X_i\right\}_{i=1}^N$; bias $g\in \mathcal{M}$ (here $G = \mathcal{M}$ as Lie group); running mean $M$; scale factor $s>0$.} 
    \KwOut{updated running mean $M$.}
   Compute batch mean, $M_b$ of $\left\{X_i\right\}_{i=1}^N$ using Riemannian metric (incremental FM in Eq. \eqref{fmrec})\;
    Update running mean $M$ by $M_b$ (incremental FM in Eq. \eqref{fmrec})\;
    $X_i\leftarrow M_b^{-1} X_i$\;
    $X_i \leftarrow \textsf{expm}\left(s\textsf{logm}\left(X_i\right)\right)$\;
    $X_i \leftarrow g X_i$.
       
    \caption{{Training step of normalization on a Lie group}}
    \label{alg2train}
    }
\end{algorithm}

The testing algorithm is similar to before and is presented next in Alg. \eqref{alg2test}. The testing algorithm inputs the updated running mean $M$ and bias $g$ and scale $s$ from training algorithm in Alg. \eqref{alg2train} and changes the distribution of the test samples accordingly. 

\begin{algorithm}
{ \KwIn{A batch of samples $\mathcal{S} = \left\{X_i\right\}_{i=1}^N$; bias $g\in \mathcal{M}$; learned running mean $M$; scale factor $s>0$.} 
    $X_i\leftarrow M^{-1} X_i$\;
    $X_i \leftarrow \textsf{expm}\left(s\textsf{logm}\left(X_i\right)\right)$\;
    $X_i \leftarrow g X_i$.
       
    \caption{{Testing step of normalization on a Lie group}}
    \label{alg2test}
    }
\end{algorithm}

Now, we present some examples of matrix Lie groups.

\subsubsection{Lie groups: some examples}

\paragraph{SPD}: Let $\mathcal{M}$ be the manifold of $n\times n$ symmetric positive definite matrices with log-Euclidean metric. As shown in \cite{arsigny2006log}, $\textsf{SPD}$ with log-Euclidean metric is a matrix Lie group. Below, we give closed form of the operations needed in the normalization algorithm.

\begin{enumerate}[\bfseries (a)]
\item {\it Distance:} $d(X,Y) = \|\textsf{logm}(X) - \textsf{logm}(Y)\|$.
\item {\it $G$:} The group that acts on $\mathcal{M}$ (isometry group) is $G = \textsf{SO}(n)$, $n\times n$ special orthogonal matrices. 
\item {\it Group action:} $g\cdot X \mapsto gXg^T$.
\item {\it Log map:} $\textsf{Exp}^{-1}_X(Y) = \textsf{logm}(Y) - \textsf{logm}(X)$.
\item {\it Exp map:} $\textsf{Exp}_X(V) = \textsf{expm}\left(V + \textsf{logm}(X) \right)$.
\end{enumerate}

\paragraph{SO}: Let $\mathcal{M}$ be the manifold of $n\times n$ special orthogonal matrices. Below, we give closed form of the operations needed in the normalization algorithm.

\begin{enumerate}[\bfseries (a)]
\item {\it Distance:} $d(X,Y) = \|\textsf{logm}(X^TY)\|$.
\item {\it $G$:} The group that acts on $\mathcal{M}$ (isometry group) is $G = \textsf{SO}(n)$. 
\item {\it Group action:} $g\cdot X \mapsto gX$.
\item {\it Log map:} $\textsf{Exp}^{-1}_X(Y) = X\textsf{logm}(X^TY)$.
\item {\it Exp map:} $\textsf{Exp}_X(V) = X\textsf{expm}\left(X^TV\right)$.
\item {\it Parallel transport:} $\Gamma_{X\rightarrow Y}(V) = YX^TV$.
\end{enumerate}

\subsection{Architecture for network with manifold normalization}

In this section, we present the basic building blocks for manifold valued deep learning: \begin{inparaenum}[\bfseries (a)] \item the {\it ManifoldConv} layer as proposed in \cite{chakraborty2018manifoldnet} using wFM operator proposed in Eq. \eqref{fmrec} \item tangent ReLU ({\it tReLU}) as non-itineraries \item manifold normalization block \item manifold valued fully connected ({\it ManifoldFC}) layer  \end{inparaenum}. For completeness, we present the definition of manifoldconv, tReLU and manifoldfc blocks here.

{\it ManifoldConv:} Given $\{X_i\|_{i=1}^N$ and weights (to be learned) $\{w_i\}_{i=1}^N$ satisfying convexity constraint, the output of ManifoldConv is $\textsf{wFM}\left(\{X_i\}, \{w_i\}\right)$ as defined in Eq. \eqref{fmrec}.

{\it tReLU:} This layer takes $X \in \mathcal{M}$ as input and returns $\textsf{Exp}_I\left(\text{ReLU}\left(\textsf{Exp}^{-1}_I\left(X\right)\right)\right)$. 

{\it ManifoldFC:} Given $\{X_i\|_{i=1}^N$ as input, ManifoldFC returns $\{d(X_i, M)\}_{i=1}^N \subset \mathbf{R}$, where $M = \textsf{FM}\left(\{X_i\}\right)$. 

A sample manifoldnet architecture with normalization is presented in Fig. \eqref{manifoldnet}.
\begin{figure}[!ht]
    \setlength{\abovecaptionskip}{-0cm}
    \setlength{\belowcaptionskip}{-0cm} 
        \centering
               \includegraphics[width=1\columnwidth]{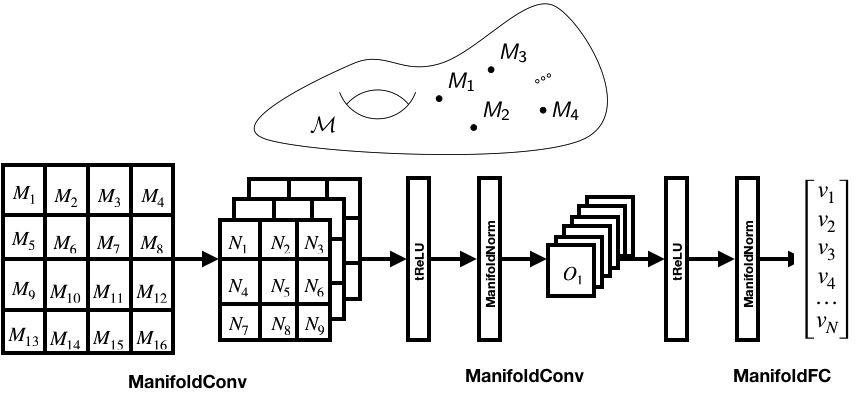}
               \caption{\footnotesize{A sample manifoldnet architecture }}\label{manifoldnet}
\end{figure}
Now, we present some proof of concepts type experiments showing the effectiveness of our proposed normalization technique for classification task.

\section{Experiments}\label{results}

In this section, we perform two sets of experiments. We perform a proof of concept type synthetic experiment on moving MNIST dataset. Then, we perform analysis on brain image classification on Human Connectome Project (HCP)  dataset.

\subsection{Moving MNIST: Moving pattern classification}

We
generated the Moving MNIST data according to the algorithm proposed in \cite{srivastava2015unsupervised}. In this experiment, we classify the moving patterns of different digits. For each moving pattern, we generated $1000$ sequences with length $20$ showing $2$ digits 
moving in the same pattern in a $64\times 64$ frame. The moving speed and the direction 
are fixed inside each class, but the digits are chosen randomly. 
In this experiment, the difference in the moving angle from two sequences across different classes is at least $5^\circ$. 

\setlength{\intextsep}{2pt}%
\setlength{\columnsep}{5pt}%
\begin{table*}[!ht]
\vspace*{-0.0cm}
\setlength{\abovecaptionskip}{-0cm}
\setlength{\belowcaptionskip}{-0.0cm} 
   \centering
   \scalebox{0.9}{
\begin{tabular}{cccccc} 
\topline\myrowcolour
 & & {\bf time (s)} & \multicolumn{3}{c}{{\bf Test acc.}} \\
\arrayrulecolor\hhline{---~~~}\arrayrulecolor{rulecolor}\hhline{~~~---}\myrowcolour
\multirow{-2}{*}{\bf Model} & \multirow{-2}{*}{\bf \# params.} & {\bf / epoch} & $30^\circ$vs. $60^\circ$ & $10^\circ$vs. $15^\circ$ & $10^\circ$vs. $15^\circ$vs. $20^\circ$ \\
\midtopline
ManifoldNet-GN & ${1072}$ & $\sim 2.7$ &  $\highest{1.00\pm 0.00}$ & $\highest{1.00 \pm 0.00}$ & $\highest{1.00 \pm 0.00}$ \\
ManifoldNet-BN & ${1052}$ & $\sim 2.7$ &  $\highest{1.00\pm 0.00}$ & $\highest{1.00 \pm 0.00}$ & ${0.98 \pm 0.01}$ \\
ManifoldNetLE-BN & ${752}$ & $\sim 2.7$ &  $\highest{1.00\pm 0.00}$ & ${0.99 \pm 0.01}$ & ${0.98 \pm 0.02}$ \\
ManifoldNet & $\highest{738}$ & $\sim 2.7$ &  $\highest{1.00\pm 0.00}$ & ${0.99 \pm 0.01}$ & ${0.97 \pm 0.02}$ \\
MVC-Net & 13564 & $\sim 4.1$ &  $\highest{1.00\pm 0.00}$ & ${0.99 \pm 0.01}$ & ${0.98 \pm 0.01}$ \\
ManifoldDCNN & ${1517}$ & $\sim 4.3$ & $\highest{1.00\pm 0.00}$ & $\highest{1.00 \pm 0.00}$ & $ {0.95 \pm 0.01}$ \\
SPD-SRU & 1559 & $\sim 6.2$ & $\highest{1.00\pm 0.00}$ & ${0.96 \pm 0.02}$ & ${0.94 \pm 0.02}$ \\
TT-GRU & $2240$ & $\sim\highest{2.0}$ & $\highest{1.00 \pm 0.00}$ & $0.52 \pm 0.04$ & $0.47 \pm 0.03$ \\
TT-LSTM & $2304$ & $\sim\highest{2.0}$ & $\highest{1.00 \pm 0.00}$ & $0.51 \pm 0.04$ & $0.37 \pm 0.02$ \\
SRU & $159862$ & $\sim 3.5$ & $\highest{1.00 \pm 0.00}$ & $0.75 \pm 0.19$ & $0.73 \pm 0.14$ \\
LSTM & $252342$ & $\sim 4.5$ & $0.97 \pm 0.01$ & $0.71 \pm 0.07$ & $0.57 \pm 0.13$ \\
\bottomline
\end{tabular}
}
\caption{\footnotesize Comparative results on Moving MNIST.}
\label{results:tab1}
\end{table*}

In Table \eqref{results:tab1}, the results show that our method achieves 
the best test accuracy. We compared with several baselines including ManifoldNet \cite{chakraborty2018manifoldnet}, MVC-Net \cite{bouza2020mvc}, ManifoldDCNN \cite{zhen2019dilated}, SPD-SRU \cite{chakraborty2018statistical}, Tensor train (TT)-GRU, -LSTM \cite{yang2017tensor} and Euclidean GRU and LSTM \cite{chung2014empirical,hochreiter1997long}. We have used three variants of manifold normalization, including batch norm (BN), group norm (GN) with number of channels to be $4$ in a group, batch norm with log-Euclidean metric (LE-BN) with Lie group representation. We have used three layers of manifold convolution with a standard CNN in the beginning. The kernel of standard CNN we use has size $5\times 5$ with the input channel and
output channel set to $5$ and $10$ respectively. We have used $\textsf{tReLU}$ as non-linearities in between manifold convolution layers. All parameters are chosen in a way to use the
fewest number of parameters without deteriorating the test accuracy. We can see that using manifold  normalization the test performance increases with a small increase in the number of parameters and without sacrificing inference time.

\subsection{Real dataset}
The dataset for our method is a subset of the Human Connectome Project (HCP) \cite{van2013wu}. 
We randomly sampled $450$ subjects from the whole dataset which have the preprocessed 3T diffusion-weighted MR images (dMRI). 
The detail of the demographics is shown in Table \eqref{stat_connectome}. 
All the raw dMRI images are preprocessed with the HCP diffusion pipeline with FSL's `eddy' \cite{andersson2016integrated}.
After the correction, the orientation distribution functions (ODF) is generated using the Diffusion Imaging in Python (DIPY) toolbox\cite{garyfallidis2014dipy}. The dimension of  ODF representation is $361$ (lies on $\mathbf{S}^{361}$). We chose a region of interest (ROI) from the center of the 3D volume of the brain of the size $32\times 32\times 32$. 
A sample ROI with functional anisotropy (FA) map and the corresponding ODF are shown in FIg. \eqref{odf_sample}.
\begin{figure}[!ht]
    \setlength{\abovecaptionskip}{-0cm}
    \setlength{\belowcaptionskip}{-0cm} 
        \centering
               \includegraphics[width=1\columnwidth]{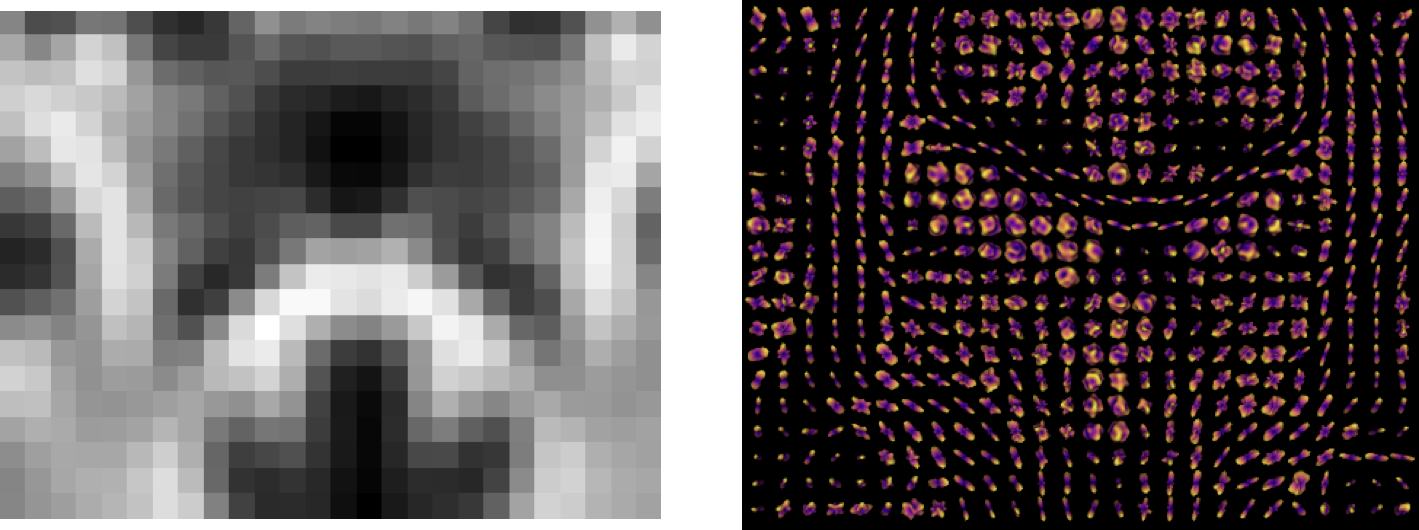}
               \caption{\footnotesize ({\it Left:}) Functional anisotropy (FA) map in a chosen ROI, ({\it Right:}) corresponding ODF }\label{odf_sample}
\end{figure}

We have performed classification of male versus female with the input as the ODF representation and the result is reported in Table \eqref{results:tab2}. We have performed random $90\%$ training and the rest for testing and report the average over $10$ independent runs. We have compared with ResNet34 \cite{he2016deep} as  baseline. We see that with Riemannian group normalization with $4$ channels per group, we can achieve the maximum testing accuracy with a very few number of parameters. 

\setlength{\intextsep}{2pt}%
\setlength{\columnsep}{5pt}%
\begin{table}[!ht]
\vspace*{-0.0cm}
\setlength{\abovecaptionskip}{-0cm}
\setlength{\belowcaptionskip}{-0.0cm} 
   \centering
   \scalebox{0.95}{
\begin{tabular}{cccc} 
\topline
 & & {\bf time (s)} & {{\bf Test acc.}} \\
{\bf Model} & {\bf \# params.} & {\bf / sample} &  \\
\midtopline
ManifoldNet-GN & ${160K}$ & $\sim 0.03$ &  $\highest{0.98\pm 0.01}$ \\
ManifoldNet-BN & ${158K}$ & $\sim 0.03$ &  ${0.96\pm 0.02}$ \\
ManifoldNet & $\highest{100K}$ & ${\sim 0.02}$ &  ${0.93\pm 0.03}$ \\
ResNet34 & ${30M}$ & $\highest{\sim 0.009}$ &  ${0.78\pm 0.05}$ \\
\bottomline
\end{tabular}
}
\caption{\footnotesize Comparative results on HCP dataset for classification of male vs. female.}
\label{results:tab2}
\end{table}

\begin{table}[!ht]
\vspace*{-0.0cm}
\setlength{\abovecaptionskip}{-0cm}
\setlength{\belowcaptionskip}{-0cm} 
   \centering
   \scalebox{0.90}{
\begin{tabular}{cccc|cc} 
\topline\myrowcolour
 \multicolumn{4}{c|}{{\bf Age}} & \multicolumn{2}{c}{{\bf Gender}} \\
 \myrowcolour
 22-25 & 26-30 & 31-35 & 36+ & Female & Male\\
 \midtopline
 99 & 194 & 150 & 7 & 228(50.7\%) & 222(49.3\%)\\
\bottomline
\end{tabular}
}
\caption{\footnotesize The demographics used in the study.}
\label{stat_connectome}
\end{table}

\section{Conclusions}
Non-Euclidean data and manifold valued data have so far gained some attention in research community. Most of these developments are based on applications to analyze medical valued images using CNNs or RNNs. But similar to standard convolutional neural network, these manifold valued methods often have difficulties including gradient explosion specifically for deeper networks. In this work, we proposed Riemannian normalization techniques for manifold valued data. Analogous to the standard CNN, we showed that using our proposed Riemannian normalization we can get the desired first and second order moments. Furthermore, we have shown that our proposed normalization technique can achieve better classification accuracies for both synthetic and real datasets including publicly available medical imaging human connectome project (HCP) dataset.

\section*{Acknowledgments}
We thank Xingjian Zhen at UW Madison for processing the HCP dataset.

{\small
\bibliographystyle{ieee_fullname}
\bibliography{citations}
}

\end{document}